\documentclass[accepted,hidelinks]{uai2022} %
\usepackage[american]{babel}
\usepackage{natbib} %
    \bibliographystyle{plainnat}
    
\usepackage{mathtools} %
\usepackage{booktabs} %
\usepackage[dvipsnames]{xcolor}         %
\usepackage[utf8]{inputenc} %
\usepackage[T1]{fontenc}    %
\usepackage{url}            %
\usepackage{booktabs}       %
\usepackage{amsfonts,amsmath,amssymb,amsthm}       %
\usepackage{nicefrac}       %
\usepackage{microtype}      %
\usepackage[noabbrev,capitalize]{cleveref}
\usepackage{graphicx}
\usepackage{subcaption}
\usepackage{enumitem}
\usepackage{xspace}

\title{Bias Challenges in Counterfactual Data Augmentation}

\author[1]{\href{mailto:<chandr@purdue.edu>?Subject=Your UAI CRL 2022 paper}{S Chandra Mouli}{}}
\author[2]{Yangze Zhou}
\author[1]{Bruno Ribeiro}
\affil[1]{%
    Department of Computer Science \\
    Purdue University\\
    West Lafayette, IN, USA
}
\affil[2]{%
    Department of Statistics \\
    Purdue University\\
    West Lafayette, IN, USA
}
\usepackage{amsmath,amsfonts,bm,dsfont,diagbox}
\usepackage{letltxmacro}
\makeatletter

\newcommand{\supp}{\text{supp}}

\LetLtxMacro\orgvdots\vdots
\LetLtxMacro\orgddots\ddots

\DeclareRobustCommand\vdots{%
  \mathpalette\@vdots{}%
}
\newcommand*{\@vdots}[2]{%
  \sbox0{$#1\cdotp\cdotp\cdotp\m@th$}%
  \sbox2{$#1.\m@th$}%
  \vbox{%
    \dimen@=\wd0 %
    \advance\dimen@ -3\ht2 %
    \kern.5\dimen@
    \dimen@=\wd2 %
    \advance\dimen@ -\ht2 %
    \dimen2=\wd0 %
    \advance\dimen2 -\dimen@
    \vbox to \dimen2{%
      \offinterlineskip
      \copy2 \vfill\copy2 \vfill\copy2 %
    }%
  }%
}
\DeclareRobustCommand\ddots{%
  \mathinner{%
    \mathpalette\@ddots{}%
    \mkern\thinmuskip
  }%
}
\newcommand*{\@ddots}[2]{%
  \sbox0{$#1\cdotp\cdotp\cdotp\m@th$}%
  \sbox2{$#1.\m@th$}%
  \vbox{%
    \dimen@=\wd0 %
    \advance\dimen@ -3\ht2 %
    \kern.5\dimen@
    \dimen@=\wd2 %
    \advance\dimen@ -\ht2 %
    \dimen2=\wd0 %
    \advance\dimen2 -\dimen@
    \vbox to \dimen2{%
      \offinterlineskip
      \hbox{$#1\mathpunct{.}\m@th$}%
      \vfill
      \hbox{$#1\mathpunct{\kern\wd2}\mathpunct{.}\m@th$}%
      \vfill
      \hbox{$#1\mathpunct{\kern\wd2}\mathpunct{\kern\wd2}\mathpunct{.}\m@th$}%
    }%
  }%
}

\let\save@mathaccent\mathaccent
\newcommand*\if@single[3]{%
    \setbox0\hbox{${\mathaccent"0362{#1}}^H$}%
    \setbox2\hbox{${\mathaccent"0362{\kern0pt#1}}^H$}%
    \ifdim\ht0=\ht2 #3\else #2\fi
    }
\newcommand*\rel@kern[1]{\kern#1\dimexpr\macc@kerna}
\newcommand*\wide@bar[2]{\if@single{#1}{\wide@bar@{#1}{#2}{1}}{\wide@bar@{#1}{#2}{2}}}
\newcommand*\wide@bar@[3]{%
\begingroup
\def\mathaccent##1##2{%
    \let\mathaccent\save@mathaccent
    \if#32 \let\macc@nucleus\first@char \fi
    \setbox\z@\hbox{$\macc@style{\macc@nucleus}_{}$}%
    \setbox\tw@\hbox{$\macc@style{\macc@nucleus}{}_{}$}%
    \dimen@\wd\tw@
    \advance\dimen@-\wd\z@
    \divide\dimen@ 3
    \@tempdima\wd\tw@
    \advance\@tempdima-\scriptspace
    \divide\@tempdima 10
    \advance\dimen@-\@tempdima
    \ifdim\dimen@>\z@ \dimen@0pt\fi
    \rel@kern{0.6}\kern-\dimen@
    \if#31
        \overline{\rel@kern{-0.6}\kern\dimen@\macc@nucleus\rel@kern{0.4}\kern\dimen@}%
        \advance\dimen@0.4\dimexpr\macc@kerna
        \let\final@kern#2%
        \ifdim\dimen@<\z@ \let\final@kern1\fi
        \if\final@kern1 \kern-\dimen@\fi
    \else
        \overline{\rel@kern{-0.6}\kern\dimen@#1}%
    \fi
}%
\macc@depth\@ne
\let\math@bgroup\@empty \let\math@egroup\macc@set@skewchar
\mathsurround\z@ \frozen@everymath{\mathgroup\macc@group\relax}%
\macc@set@skewchar\relax
\let\mathaccentV\macc@nested@a
\if#31
    \macc@nested@a\relax111{#1}%
\else
    \def\gobble@till@marker##1\endmarker{}%
    \futurelet\first@char\gobble@till@marker#1\endmarker
    \ifcat\noexpand\first@char A\else
        \def\first@char{}%
    \fi
    \macc@nested@a\relax111{\first@char}%
\fi
\endgroup
}
\makeatother

\newtheorem{definition}{Definition}

\def\eqref#1{equation~\ref{#1}}

\def\1{\bm{1}}

\def\vu{{\bm{u}}}

\DeclareMathAlphabet{\mathsfit}{\encodingdefault}{\sfdefault}{m}{sl}
\SetMathAlphabet{\mathsfit}{bold}{\encodingdefault}{\sfdefault}{bx}{n}

\def\cX{{\mathcal{X}}}

\def\sR{{\mathbb{R}}}

\DeclareMathOperator*{\argmax}{argmax}

\usepackage{xcolor}      %
\usepackage{thm-restate}

\newcommand{\update}[1]{{#1}}

\newcommand{\doop}{\text{\em do}}

\newcommand{\cf}{\text{cf}}
\newcommand{\da}{\text{cda}}

\newcommand{\te}{\text{te}}

\newcommand{\map}{\text{MAP}}

\newcommand{\positivetone}{{\color{ForestGreen}positive tone}}
\newcommand{\negativetone}{{\color{BrickRed}negative tone}}
\newcommand{\neutraltone}{{\color{Gray}neutral tone}}

\begin{document}
\maketitle

\begin{abstract}
Deep learning models tend not to be out-of-distribution robust primarily due to their reliance on spurious features to solve the task. 
Counterfactual data augmentations provide a general way of (approximately) achieving representations that are counterfactual-invariant to spurious features, a requirement for out-of-distribution (OOD) robustness.
In this work, we show that counterfactual data augmentations may not achieve the desired counterfactual-invariance if the augmentation is performed by a {\em context-guessing machine}, an abstract machine that guesses the most-likely context of a given input.
We theoretically analyze the invariance imposed by such counterfactual data augmentations and describe an exemplar NLP task where counterfactual data augmentation by a context-guessing machine does not lead to robust OOD classifiers. 
\end{abstract}

\section{Introduction}
Despite its tremendous success, deep learning suffers from a significant challenge of robust out-of-distribution (OOD) predictions when the test distribution is different from the training distribution, especially due to its inclination to learn spurious patterns and shortcuts to solve the task 
\citep{jo2017measuring,geirhos2020shortcut,poliak2018hypothesis,d2020underspecification}. 
Invariant Risk Minimization and similar methods~\citep{arjovsky2019invariant,bellot2020accounting,krueger2021out} propose to solve this by learning representations that are invariant across multiple environments but can be insufficient for OOD generalization without additional assumptions~\cite{ahuja2021invariance}. 
Recent works have increasingly used causal language to formally define and learn non-spurious representations~\citep{wang2021desiderata,veitch2021counterfactual} in order to be robust in OOD tasks. 
\citet{veitch2021counterfactual,mouli2021asymmetry} define \textit{counterfactual invariance} to spurious features as a requirement for robust OOD predictors.

A simple way of (approximately) achieving counterfactual-invariant predictors is via counterfactual data augmentations (CDA)~\citep{lu2020gender,kaushik2019learning,sauer2021counterfactual}, where one augments the training data with inputs generated from different spurious features. 
This enables a predictor to learn to be invariant to these spurious features.
\citet{lu2020gender,zmigrod2019counterfactual,maudslay2019s} use counterfactual data augmentation to mitigate gender biases in natural language models, for example by counterfactually modifying the gendered words in the text. 
\citet{kaushik2019learning,kaushik2020explaining,teney2020learning} use human annotators to generate counterfactual examples by making minimal changes to a given text, although this approach may not achieve the desired robustness due to lack of diversity in augmented examples~\citep{joshi2021investigation}.
\citet{von2021self} uses self-supervision and data augmentation to provably disentangle content from style in vision tasks.
Other works have used pretrained models to counterfactually augment smaller datasets~\citep{hasan2021counterfactual,liu2021counterfactual}.
While these works propose varied ways of performing counterfactual data augmentations, the general principle remains the same: To obtain representations that are either disentangled or counterfactually-invariant to spurious features (\cref{def:counterfactual_invariance}). 

In this work, we show how counterfactual data augmentations may not achieve the desired counterfactual invariance to spurious associations if these augmentations are performed by a {\em context-guessing machine} (\Cref{def:contextguess}). 
We define a context-guessing machine as an abstract machine (ML model, human annotator or algorithm) that infers the most-likely context of a given input $x$ before performing counterfactual modifications. 
We show that performing counterfactual changes with the most-likely context rather than considering all possible contexts can result in a representation that is not counterfactually invariant (\cref{thm:cda_not_cf}).
Our analysis suggests that one must be careful 
while designing counterfactual data augmentation methods (e.g., eliciting counterfactual examples from human annotators) 
to avoid the bias introduced from guessing a particular context for the given example.

\section{Counterfactual Invariance}
We begin with a brief discussion of structural causal models and the definition of counterfactual variables which will be helpful in defining counterfactual-invariant representations.

\begin{figure}[t]
    \centering
    \includegraphics[height=1.0in]{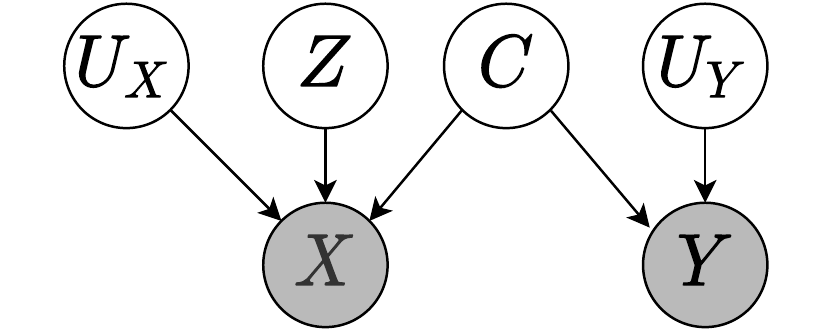}
    \caption{SCM over the observed input $X$ and the corresponding label $Y$. 
    $X$ is obtained from two variables $C$ and $Z$ with only $C$ affecting the label $Y$ and $Z$ is spurious.
    $U_X$ and $U_Y$ denote the background noise variables in the SCM.
    The associational task is to predict $Y$ from $X$. 
    Since $Y$ depends only on $C$, we wish to learn a representation $\Gamma: \cX \to \sR^d$ of the input $X$ that is counterfactually-invariant to $Z$. 
    }
    \label{fig:scm}
\end{figure}

\paragraph{Structural Causal Model.} 
A structural causal model (SCM)~\citep[Chapter 7]{pearl2009causality} describes the causal relationships between all the relevant variables and encodes the assumptions on how the observed data is generated. 
An SCM consists of two sets of variables: \textbf{(a)} endogenous variables, those that have a causal definition of how they are obtained from other variables, and \textbf{(b)} exogenous variables, those that are not described by the given causal model, but affect the endogenous variables.
For example, consider the SCM given below
\begin{align*}
X &= f_X(Z, C) + U_X \\
Y &= f_Y(C) + U_Y \:,
\end{align*}
where $\{X, Y\}$ are observed endogenous variables and $\{U_X, U_Y, Z, C\}$ are unobserved exogenous variables. 
A (given) distribution over the exogenous variables $P(U_X, U_Y, Z, C)$ entails a distribution over the endogenous variables $P(X, Y)$.
This SCM can be represented as a causal graph as shown in \cref{fig:scm}. 
A typical learning task is to predict $Y$ from $X$; note that the task is associational, there is no causal link from $X$ to $Y$.

\paragraph{Counterfactuals.} 
Counterfactuals describe a {\em what-if} question given a particular observation. 
For example, observing $X=x$, the counterfactual question can be ``what would be the value of $X$ had $Z=z$?''. 
We express this question using the counterfactual random variable $X(Z=z) | X=x$.
Given the complete structural causal model, the counterfactual variable $X(Z=z) | X=x$ can be computed as follows~\citep[Chapter 7]{pearl2009causality}: 
\begin{enumerate}
    \item (Abduction.) Compute $P(\mathcal{U} | X=x)$ where $\mathcal{U}$ is the set of all exogenous variables. 
    \item (Action.) Perform the intervention $\doop(Z = z)$ in the given SCM. 
    \item (Prediction.) Compute the distribution of $X$ in the modified SCM using the modified distribution for the exogenous variables. 
\end{enumerate}

We can write distribution of the counterfactual variable $X(Z=z) | X=x$ formally as:
\begin{align}\label{eq:integral_cf_first}
P(&X(Z = z) = x' | X = x) = \nonumber \\ 
&\int P(X = x' | \doop(Z=z), \mathcal{U} = \vu) dP(\mathcal{U} = \vu | X=x) \:.
\end{align}
The integral denotes the three steps of computing the counterfactuals: \textbf{(i)} abduction step to obtain the distribution of the exogenous variables $P(\mathcal{U} | X=x)$, \textbf{(ii)} making the desired intervention $\doop(Z=z)$, and \textbf{(iii)} computing the endogenous variable $X$ under intervention using the abducted distribution.
An important point to remember while computing counterfactuals is that they 
need not only deal with individual realizations, i.e., given $X=x$, there can be a population of individuals given by the distribution $P(\mathcal{U} | X=x)$.
The do-operation is then applied to all these individuals. 

\paragraph{Counterfactual-invariant representations.}
Now we are ready to describe counterfactual-invariant representation defined in \cite{mouli2021asymmetry,veitch2021counterfactual}.
\citet{veitch2021counterfactual} define the counterfactual variable using the potential outcomes notation $X(z')$, i.e., what would $X$ be had $Z=z'$ \textit{leaving all else fixed}. 
While we use the notation of \citet{mouli2021asymmetry}, the definitions are equivalent when $\supp(Z^\te) = \supp(Z)$, as assumed throughout this work.
\begin{definition}[Counterfactual-invariant representations \citep{mouli2021asymmetry}] \label{def:counterfactual_invariance}
Given any SCM with at least two variables $X$ and $Z$, a representation $\Gamma_\cf:\cX \to \sR^d$, $d \geq 1$ of $X$ is counterfactual-invariant to the variable $Z$ 
if 
\begin{align} \label{eq:counterfactual_invariance}
\Gamma_\cf(x) = \Gamma_\cf(X(Z = z') | X = x)
\end{align}
almost everywhere, $\forall z'\in \text{supp}(Z), \forall x \in \text{supp}(X)$, where $\text{supp}(A)$ is the support of random variable $A$.
\end{definition}

The counterfactual variable $X(Z=z' | X=x)$ in the RHS of \cref{eq:counterfactual_invariance} is as defined in \cref{eq:integral_cf_first}. 
Then, \cref{eq:counterfactual_invariance} says that $\Gamma_\cf$ should have the same output $\Gamma_\cf(x)$ for all values in the support of the counterfactual random variable $X(Z=z') | X=x$.
Revisiting the SCM in \cref{fig:scm}, we can see that an OOD robust classifier should use representations that are counterfactual invariant to the spurious features $Z$ (as $Z$ does not affect $Y$). 
\update{A common way of obtaining counterfactual invariant representations is to augment counterfactual examples $(x', y)$ for every data sample $(x, y)$ with $x' \sim P(X(Z=z') | X=x)$ for $z' \in \supp(Z)$.
In the next section, we look at an example of counterfactual data augmentation in the context of classifying text reviews and showcase a scenario when it does not lead to robust classifiers.
}

\section{Example: Counterfactual Data Augmentation in NLP} \label{sec:example}

\begin{table*}[t]
    \centering
    \begin{tabular}{@{}rccccc@{}}
    \toprule
        $U_X$ & $Z$ & $C$ & $U_Y$ & $X$ & $Y$ \\
    \midrule
        $1$ & \texttt{like} & [$\text{good}_1$]  & 0 &  [$\text{good}_1$, \positivetone] & \texttt{helpful} \\
        $1$ & \texttt{dislike} & [$\text{good}_1$] & 0 & [$\text{good}_1$, \negativetone] & \texttt{helpful} \\
        $-1$ & \texttt{like} & [$\text{good}_1$] & 0 & [$\text{good}_1$, \neutraltone] & \texttt{helpful} \\
        $-1$ & \texttt{dislike} & [$\text{good}_1$] & 0& [$\text{good}_1$, \positivetone] & \texttt{helpful} \\
        $1$ & \texttt{like} & [$\text{poor}_1$] & 0& [$\text{poor}_1$, \positivetone] & \texttt{not helpful} \\
        $1$ & \texttt{dislike} & [$\text{poor}_1$] & 0&  [$\text{poor}_1$, \negativetone] & \texttt{not helpful} \\
        $-1$ & \texttt{like} & [$\text{poor}_1$] & 0& [$\text{poor}_1$, \neutraltone] & \texttt{not helpful}\\
        $-1$ & \texttt{dislike} & [$\text{poor}_1$] & 0& [$\text{poor}_1$, \positivetone] & \texttt{not helpful}\\
        $\vdots$&$\vdots$ &$\vdots$ & $\vdots$ & $\vdots$ & $\vdots$  \\
    \bottomrule
    \end{tabular}
    \caption{
    \textbf{An example where counterfactual data augmentation with a context-guessing machine does not lead to a counterfactual-invariant representation.}
    The table shows a succinct description of the structural causal model for a review classification task (associated causal graph in \cref{fig:scm}). 
    $X$ and $Y$ denote the observed text review and the corresponding helpfulness label respectively.
    The associational task is to predict $Y$ from $X$. 
    Input $X$ is obtained as a function of two unobserved variables $C$ and $Z$.
    $C$ denotes the actual content describing the product and directly affects the helpfulness label $Y$. 
    We show two possible values for $C$ denoted [$\text{good}_1$] and [$\text{poor}_1$] for simplicity representing particular good and poor quality contents respectively.
    $Z$ denotes the sentiment of the reviewer about the product. 
    $U_X$ denotes the different types of reviewers, straightforward ($U_X = 1$) or sarcastic ($U_X = -1$). 
    For a straightforward individual, $Z=\texttt{like}$ implies that $X$ has a \positivetone, whereas for a sarcastic individual, $Z=\texttt{dislike}$ implies that $X$ has a \positivetone.
    Finally, since $Y$ depends only on $C$ and not on the sentiment $Z$, we wish to learn a representation $\Gamma: \cX \to \sR^d$ of the input $X$ that is counterfactually-invariant to $Z$ using counterfactual DA. 
    However, a context-guessing machine infers the most likely context, for example, that \positivetone ~is from a straightforward reviewer who likes the product, and does CDA under this context. 
    This does not result in counterfactual invariance as the alternative context for the \positivetone---a sarcastic reviewer who dislikes the product---is not considered.
    }
    \label{tab:prob}
\end{table*}

In this section, we consider an example NLP task of predicting the helpfulness of a product review while being counterfactually-invariant to the sentiment of the review which is spurious for this task~\citep{veitch2021counterfactual}. 

\paragraph{Structural causal model for review classifiation.}
We begin with the structural causal model that generates the text $X$ and the helpfulness label $Y$ (associated causal graph in \cref{fig:scm}).
In this example, $Z$ denotes the sentiment of the reviewer about the product (\texttt{like} or \texttt{dislike}), $C$ denotes the content describing the product, $U_X$ is the type of reviewer, crudely categorized as straightforward ($U_X = 1$) or sarcastic ($U_X=-1$), and $U_Y$ is label noise (assumed to be zero).  
\cref{tab:prob} concretely defines how these variables affect the text $X$ and its helpfulness label $Y$. 
Since it is not feasible to describe all possible text inputs $X$, we use placeholders $[~\cdots~]$ to describe the type of text, while the actual content of the review may vary across the dataset. 
For example, [$\text{good}_1$, \positivetone] represents a particular review with good quality content and written in a positive tone. 
Further assume that $P(U_X = 1) = 0.9$, i.e., straightforward reviewers are a lot more likely than sarcastic ones, $P(Z=\texttt{like}) = 0.5$, and $P(U_Y = 0) = 1$.

A straightforward individual's sentiment affects the text in the usual way (e.g., $Z=\texttt{like}$ $\implies$ $X$ has a \positivetone). 
On the other hand, the effect of a sarcastic individual's sentiment on $X$ is more complicated: $Z=\texttt{dislike}$ $\implies$ $X$ has a \positivetone, and $Z=\texttt{like}$ $\implies$ $X$ has a \neutraltone.
Now, it is clear from \cref{tab:prob} (also from \cref{fig:scm}) that the sentiment $Z$ is spurious for the label $Y$ which only depends on $C$. 
However, a classifier may not learn this invariance to $Z$ automatically from training data, especially if all the possible values of $X$ are not seen during training. 
Thus, our goal is to obtain a representation that is counterfactually-invariant to the sentiment $Z$ which will allow us to build OOD robust classifiers. 
\update{That is, we want to augment the original dataset with counterfactual data with respect to the sentiment $Z$ in order to obtain the counterfactual-invariant representation.}

\paragraph{Counterfactual data augmentation (CDA).} 
Given a text input $x = [\text{good}_1, \text{\positivetone}]$,  \cref{def:counterfactual_invariance} enforces the invariance considering all possible contexts ($Z = \texttt{like}, Z=\texttt{dislike}$), thus considering sarcastic individuals as well as straightforward ones. 
\update{Considering both contexts, CDA augments $[\text{good}_1, \text{\negativetone}]$ and $[\text{good}_1, \text{\neutraltone}]$, thus enforcing the following invariance over the representation $\Gamma_\cf$:} $\Gamma_\cf([\text{good}_1, \text{\positivetone}]) = \Gamma_\cf([\text{good}_1, \text{\negativetone}]) = \Gamma_\cf([\text{good}_1, \text{\neutraltone}])$.  

\update{Next we show that a bias may be introduced if the counterfactual data augmentation algorithm (e.g., using humans-in-the-loop) does not consider all the possible contexts, but instead guesses the most-likely context. 
We will denote this type of augmentation machine as a \textit{context-guessing machine}.
}
As before, consider the text input $x = [\text{good}_1, \text{\positivetone}]$. \update{A context-guessing machine infers the most-likely context (i.e., maximum a posteriori estimate) of the text as $Z=\texttt{like}$ due to the positive tone, thus indirectly only considering straightforward individuals with $U_X = 1$. }
The counterfactually augmented example is $x' = [\text{good}_1, \text{\negativetone}]$ with the same label $Y=\texttt{helpful}$ and enforces the following invariance on $\Gamma_\da$: $\Gamma_\da([\text{good}_1, \text{\positivetone}]) = \Gamma_\da([\text{good}_1, \text{\negativetone}])$.
But, clearly this is not enough for counterfactual-invariance as $\Gamma_\da([\text{good}_1, \text{\neutraltone}])$ can be arbitrarily different.
Thus, a classifier that uses the representation $\Gamma_\da$ is not guaranteed to be robust to OOD changes to sentiment $Z$.

\section{CDA by a context-guessing machine}

In this section, we analyze the counterfactual data augmentations performed by a context-guessing machine more formally. 
We begin with a definition of a context-guessing machine, an abstract machine (e.g., ML model, human annotator or algorithm) that guesses the most-likely context for the given input $x$, i.e., most-likely instantiation of a parent of $x$. 
Throughout this section, our definitions consider a single parent $Z$ of $X$, but they can be easily extended for multiple parent (context) variables. 
\begin{definition}[Context-guessing machine]
\label{def:contextguess}
Given any SCM with $Z \rightarrow X$, a context-guessing machine assumes the context of $x$ to be $z^\map(x) = \argmax_z P(Z=z | X=x)$, which is the maximum a posteriori (MAP) estimate of $Z$ given $X=x$.
\end{definition}

In \cref{def:guess_cda}, we define counterfactual data augmentation with such a context-guessing machine which works as follows: \textbf{(a)} Given $X=x$, the context is inferred to be $Z=z^\map(x)$, and \textbf{(b)} a counterfactual example is generated \textit{conditioned on the inferred context} while preserving the label. 

\begin{definition}[Guess-CDA]\label{def:guess_cda}
Counterfactual data augmentation derived from a context-guessing machine is defined as follows:
For every $(x, y)$ in the training data $\mathcal{D}$, an augmented example is $(x', y)$ where $x' \sim P(X(Z=z) | X=x, Z=z^\map(x))$ for $z \in \supp(Z)$.
\end{definition}

The variable $X(Z = z) | X = x, Z=z^\map(x)$ in \cref{def:guess_cda} is different from the counterfactual variable of interest $X(Z=z) | X=x$. 
Recall that the definition of counterfactual variables in \cref{eq:integral_cf_first} involved abducted distributions over the set of exogenous $\mathcal{U}$ given the evidence. 
The distribution of counterfactually-augmented examples in \cref{def:guess_cda} is given by
\begin{align}\label{eq:integral_cf_second}
P(&X(Z = z) = x' | X = x, Z=z^\map(x)) = \nonumber \\ 
&\int P(X = x' | \doop(Z=z), U_X = u)  \nonumber \\ 
&\qquad \qquad dP(U_X = u | X=x, Z=z^\map(x)) \:,
\end{align}
where the abducted distribution $P(U_X = u | X=x, Z=z^\map(x))$ marks the only difference from \cref{eq:integral_cf_first}. 
Next, we explicitly define the invariance imposed on a representation trained over the counterfactually-augmented data of \cref{def:guess_cda}.
\begin{definition}[Guess-CDA-invariance] \label{def:cda_invariance}
Given any SCM with $Z \rightarrow X$,
the invariance imposed on a representation $\Gamma_\da:\cX \to \sR^d$, $d \geq 1$ of $X$ by the counterfactual data augmentation from a context-guessing machine in \cref{def:guess_cda} is
\begin{align} \label{eq:cda_invariance}
\Gamma_\da(x) = \Gamma_\da(X(Z = z) | X = x, Z=z^\map(x))
\end{align}
almost everywhere, $\forall z \in \text{supp}(Z), \forall x \in \text{supp}(X)$,
where $z^\map(x) = \argmax_z P(Z=z | X=x)$ is the maximum a posteriori (MAP) estimate of $Z$ given $X=x$ and $\text{supp}(A)$ is the support of random variable $A$.
\end{definition}

The support of the counterfactual variable in the RHS of \cref{eq:cda_invariance} can be different than the support of that in \cref{eq:counterfactual_invariance} \update{as illustrated by the example in \cref{sec:example}}.
Our next theorem formalizes this notion and states that the invariance imposed by \cref{def:cda_invariance} on $\Gamma_\da$ is weaker than the desired invariance of \cref{def:counterfactual_invariance}. 
Hence, when performing CDA with a context-guessing machine, we are not guaranteed to obtain a counterfactually-invariant representation.

\begin{restatable}[$\Gamma_\da$ of \cref{def:cda_invariance} is not counterfactually-invariant]{theorem}{cdanotcf} \label{thm:cda_not_cf}
Given any SCM with $Z \to X$,
let 
$\Gamma_\da$ denote the representation defined in \cref{def:cda_invariance} obtained via counterfactual data augmentation from a context-guessing machine. 
Then, in general, $\Gamma_\da$ is not counterfactual-invariant according to \cref{def:counterfactual_invariance}.
\end{restatable}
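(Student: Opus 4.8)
The plan is to prove the statement by exhibiting a single SCM for which the Guess-CDA-invariance of \Cref{def:cda_invariance} is strictly weaker than the counterfactual invariance of \Cref{def:counterfactual_invariance}; since the theorem only claims that $\Gamma_\da$ fails to be counterfactual-invariant \emph{in general}, one such witness suffices. The natural choice is the review-classification SCM summarized in \Cref{tab:prob} (causal graph in \Cref{fig:scm}), with $P(U_X=1)=0.9$, $P(Z=\texttt{like})=0.5$, and $P(U_Y=0)=1$. I would fix the input $x=[\text{good}_1, \text{\positivetone}]$ and compare the support of the true counterfactual variable of \Cref{eq:integral_cf_first} with that of the guessed counterfactual variable of \Cref{eq:integral_cf_second}.

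First I would carry out the abduction for the true counterfactual. Conditioning on $X=x$ pins $C=[\text{good}_1]$ and leaves $U_X$ with $P(U_X=1\mid X=x)=0.9$ and $P(U_X=-1\mid X=x)=0.1$ (the two rows of \Cref{tab:prob} that produce a \positivetone~with content $[\text{good}_1]$). Then $\doop(Z=\texttt{dislike})$ yields $[\text{good}_1, \text{\negativetone}]$ on the straightforward branch and $[\text{good}_1, \text{\positivetone}]$ on the sarcastic branch, while $\doop(Z=\texttt{like})$ yields $[\text{good}_1, \text{\positivetone}]$ and $[\text{good}_1, \text{\neutraltone}]$ respectively. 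Hence $\bigcup_{z'\in\supp(Z)}\supp\big(X(Z=z')\mid X=x\big)=\{[\text{good}_1, \text{\positivetone}], [\text{good}_1, \text{\negativetone}], [\text{good}_1, \text{\neutraltone}]\}$, so \Cref{def:counterfactual_invariance} forces $\Gamma_\cf$ to take a single value on all three inputs.

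Next I would compute $z^\map(x)$ and the abducted distribution appearing in \Cref{eq:integral_cf_second}. From the posterior above, $P(Z=\texttt{like}\mid X=x)=0.9>0.1=P(Z=\texttt{dislike}\mid X=x)$, so $z^\map(x)=\texttt{like}$. Conditioning on $X=x$ \emph{and} $Z=\texttt{like}$ is consistent only with $U_X=1$ (the sarcastic branch with $Z=\texttt{like}$ would produce a \neutraltone, not a \positivetone), so $P(U_X=1\mid X=x, Z=z^\map(x))=1$. Therefore $\supp\big(X(Z=z)\mid X=x, Z=z^\map(x)\big)$ over $z\in\supp(Z)$ equals only $\{[\text{good}_1, \text{\positivetone}], [\text{good}_1, \text{\negativetone}]\}$ --- the context-guessing machine never generates the neutral-tone counterfactual. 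Consequently \Cref{def:cda_invariance} only imposes $\Gamma_\da([\text{good}_1, \text{\positivetone}])=\Gamma_\da([\text{good}_1, \text{\negativetone}])$ and relates neither of these to $\Gamma_\da([\text{good}_1, \text{\neutraltone}])$. Choosing any $\Gamma_\da$ that respects the former equality but sends $[\text{good}_1, \text{\neutraltone}]$ to a different value then satisfies \Cref{def:cda_invariance} while violating \Cref{eq:counterfactual_invariance}, which proves the theorem.

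I expect the only delicate point to be the two abduction computations --- specifically, justifying that additionally conditioning on the guessed context $Z=z^\map(x)$ deterministically removes the sarcastic-reviewer branch ($U_X=-1$), so that the support of the guessed counterfactual is a strict subset of the support of the true counterfactual. Everything else --- the existence of the separating $\Gamma_\da$ and the reduction to a single witness SCM --- is immediate.
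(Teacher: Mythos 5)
Your overall strategy---exhibit a single SCM where the Guess-CDA constraints are strictly weaker than full counterfactual invariance, and skip the paper's general support-inclusion lemma---is legitimate in principle, and your two abduction computations at $x=[\text{good}_1,\text{\positivetone}]$ are correct. The gap is in the final step: the separating $\Gamma_\da$ you propose does \emph{not} satisfy \cref{def:cda_invariance}, because that definition quantifies over \emph{all} $x\in\supp(X)$, while you only collected the constraints generated from $x=[\text{good}_1,\text{\positivetone}]$. The review $[\text{good}_1,\text{\neutraltone}]$ is itself in $\supp(X)$ (it is produced by $U_X=-1$, $Z=\texttt{like}$ with probability $0.05$), and for that input the abduction is deterministic: $z^\map([\text{good}_1,\text{\neutraltone}])=\texttt{like}$ and $P(U_X=-1\mid X=[\text{good}_1,\text{\neutraltone}],Z=\texttt{like})=1$, so $\doop(Z=\texttt{dislike})$ yields $[\text{good}_1,\text{\positivetone}]$ and \cref{def:cda_invariance} forces $\Gamma_\da([\text{good}_1,\text{\neutraltone}])=\Gamma_\da([\text{good}_1,\text{\positivetone}])$. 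Combined with the constraint you did derive, the transitive closure of the Guess-CDA constraints already identifies all three tones; on this SCM, Guess-CDA-invariance coincides with counterfactual invariance and the witness fails. The underlying point is that a strict inclusion of supports at one fixed $x$ does not prove the theorem: the counterfactual world that the context-guessing machine misses at $x$ can be recovered as the Guess-CDA counterfactual of some \emph{other} point of $\supp(X)$.

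The paper's proof is built to avoid exactly this. Its part (a) establishes the support inclusion in general (which you may reasonably omit for the literal claim), but its part (b) constructs a different witness, the linear SCM $X=Z+2U_X$ with $U_X\in\{-1,0,1\}$, where the constraint systems differ even after accumulating constraints from every $x\in\supp(X)$: the MAP context always selects the branch $U_X=\pm1$, so Guess-CDA only ever links $1\leftrightarrow 3$ and $-1\leftrightarrow -3$, whereas true counterfactual invariance retains the $U_X=0$ branch, connects the two components, and forces $\Gamma_\cf$ to be constant. To repair your argument you would need either to switch to such an example or to modify the NLP SCM so that the neutral-tone world cannot be linked back to the positive-tone review through its own MAP context.
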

We prove the theorem in \cref{appx:proof_of_cda_not_cf} in two steps. \textbf{(a)} First, we show that the invariance restriction imposed over $\Gamma_\da$ in \cref{def:cda_invariance} is \textit{never stronger} than that imposed over $\Gamma_\cf$ in \cref{eq:counterfactual_invariance} by comparing the supports of the RHS in \cref{eq:counterfactual_invariance,eq:cda_invariance}.
That is, we show $\bigcup_{z} \supp(X(Z = z) | X=x, Z=z^\map(x)) \subseteq \bigcup_{z} \supp(X(Z=z) | X=x)$.  
\textbf{(b)} Then, we show a linear SCM example where the invariance restriction of \cref{def:cda_invariance} is \textit{strictly weaker} than that of \cref{def:counterfactual_invariance}.
In this simple example, \cref{def:counterfactual_invariance} forces $\Gamma_\cf$ to be a constant function, whereas \cref{def:cda_invariance} allows $\Gamma_\da$ to take two different values based on the input $x$. 

\section{Solution}
The solution to the challenge described above is relatively simple. We just need to avoid guessing the most likely context.
The support (or at least all likely contexts $Z$) must be present in the data augmentation procedure. That means, for instance, giving context suggestions to human annotators either by sampling from $P(Z|X)$ or by considering all the likely contexts based on $P(Z|X)$.

\section{Conclusions}
Counterfactual invariance to spurious features is a desired property for OOD robustness of predictors. 
A general way of approximately achieving counterfactual invariance is via counterfactual data augmentations.  
In this work, we studied counterfactual data augmentations performed by a context-guessing machine and showed that a representation trained on the resultant augmented data may not be counterfactual-invariant. 
Our analysis suggests that one must be careful 
while designing counterfactual data augmentation methods (e.g., eliciting counterfactual examples from human annotators) 
to avoid the bias introduced from guessing a particular context for the given example.

\begin{acknowledgements} %
This work was funded in part by the National Science Foundation (NSF) Awards CAREER IIS-1943364 and CCF-1918483, the Purdue Integrative Data Science Initiative, and the Wabash Heartland Innovation Network. 
Any opinions, findings and conclusions or recommendations expressed in this material are those of the authors and do not necessarily reflect the views of the sponsors.
\end{acknowledgements}

\bibliography{paper}

\newpage
\appendix
\onecolumn
\section{Appendix}

\subsection{Proof of Theorem~\ref{thm:cda_not_cf}} \label{appx:proof_of_cda_not_cf}

\cdanotcf*
\begin{proof}

We prove the theorem in in two steps. \textbf{(a)} First, we show that the invariance restriction imposed over $\Gamma_\da$ in \cref{def:cda_invariance} is \textit{never stronger} than that imposed over $\Gamma_\cf$ in \cref{eq:counterfactual_invariance} by comparing the supports of the RHS in \cref{eq:counterfactual_invariance,eq:cda_invariance}.
That is, we show $\bigcup_{z'} \supp(X(Z = z') | X=x, Z=z^\map(x)) \subseteq \bigcup_{z'} \supp(X(Z=z') | X=x)$.  
\textbf{(b)} Next, we show a linear SCM example where invariance restriction of \cref{def:cda_invariance} is \textit{strictly weaker} than that of \cref{def:counterfactual_invariance}.

\underline{\textbf{(a)}}:
Consider the random variable $X(Z=z') | X=x$. 
\begin{align}\label{eq:integral_cf}
P( & X(Z = z') = x' | X = x) \nonumber \\ 
&= \int P(X(Z=z') = x' | Z=z, U_X=u) dP(Z=z, U_X=u| X=x) \nonumber \\
&= \int P(X=x' | \doop(Z=z'), U_X=u) dP(U_X=u| X=x)  \:, 
\end{align}
where the first term within the integral is rewritten using a do-expression and does not depend on $Z=z$.

Consider the random variable $X(Z=z') | X=x, Z=z^\map(x)$. 
\begin{align}\label{eq:integral_da}
P( & X(Z = z') = x' | X = x, Z=z^\map(x)) \nonumber \\ 
&= \int P(X(Z=z') = x' | Z=z, U_X=u) dP(Z=z, U_X=u| X=x, Z=z^\map(x)) \nonumber \\
&= \int P(X=x' | \doop(Z=z'), U_X=u) dP(U_X=u| X=x, Z=z^\map(x))  \:, 
\end{align}
where once again, we rewrite the first term using a do-expression.

Noting that $P(U_X=u | X=x) = 0 \implies P(U_X=u | X=x, Z=z^\map(x)) = 0$, 
\begin{align*}
& x' \notin  \supp(X(Z = z') | X = x)  \implies P( X(Z = z') = x' | X = x) = 0 \\ 
\implies &\int P(X=x' | \doop(Z=z'), U_X=u) dP(U_X=u| X=x) = 0 \tag*{(From \cref{eq:integral_cf})} \\
\implies &\int P(X=x' | \doop(Z=z'), U_X=u) dP(U_X=u| X=x, Z=z^\map(x)) = 0 \\
\implies & P( X(Z = z') = x' | X = x, Z=z^\map(x)) = 0  \tag*{(From \cref{eq:integral_da})}\\
\implies & x' \notin  \supp(X(Z = z') | X = x, Z=z^\map(x))  \:.
\end{align*}

Thus, we have for all $z' \in \supp(Z)$, 
$\supp(X(Z = z') | X = x, Z=z^\map(x)) \subseteq \supp(X(Z = z') | X = x)$, and hence,  $\bigcup_{z'} \supp(X(Z = z') | X=x, Z=z^\map(x)) \subseteq \bigcup_{z'} \supp(X(Z=z') | X=x)$.
This shows that invariance restriction over the representation $\Gamma_\da$ is never stronger than that over $\Gamma_\cf$. 
Next, we show an example where the restriction over $\Gamma_\da$ is strictly weaker. 

\underline{\textbf{(b)}}: 
Consider a simple SCM with $X = Z + 2 U_X$ where $U_X \in \{-1, 0, 1\}$, $Z \in \{-1, 1\}$ and subsequently $X \in \{-3, -1, 1, 3\}$.
Let $P(U_X=1) = P(U_X=-1) = 0.4$ and $P(U_X = 0) = 0.2$.  
Also, let $P(Z=1) = P(Z=-1) = 0.5$. 

\paragraph{Invariance imposed by \cref{def:counterfactual_invariance} on $\Gamma_\cf$.}
For each $x \in \{-3, -1, 1, 3\}$, we need to impose the condition:
$$
\Gamma_\cf(x) = \Gamma_\cf(X(Z=1) | X=x) = \Gamma_\cf(X(Z=-1) | X=x) \:.
$$

In what follows, we show the invariance imposed with $x=1$. Consider $X(Z=1) | X=1$ whose distribution is given by
\begin{align*}
P(X(Z=1) | X=1) &= \sum_{u \in \{-1, 0, 1\}} P(X | do(Z=1), U_X=u) P(U_X = u | X=1) \\
&= \delta_1 \cdot P(U_X = 0 | X = 1) + \delta_3 \cdot P(U_X = 1 | X = 1) \\
&= \delta_1 \cdot \frac{1}{3} + \delta_3 \cdot \frac{2}{3} \:,
\end{align*}
where $\delta_c$ is the Dirac measure at $c$. 
Since the support of $X(Z=1) | X=1$ is $\{1, 3\}$, we obtain our first invariance constraint: $\Gamma_\cf(1) = \Gamma_\cf(3)$. 
Next consider $X(Z=-1) | X=1$ 
\begin{align*}
P(X(Z=-1) | X=1) &= \sum_{u \in \{-1, 0, 1\}} P(X | do(Z=-1), U_X=u) P(U_X = u | X=1) \\
&= \delta_{-1} \cdot P(U_X = 0 | X = 1) + \delta_1 \cdot P(U_X = 1 | X = 1) \\
&= \delta_{-1} \cdot \frac{1}{3} + \delta_1 \cdot \frac{2}{3} \:.
\end{align*}
Since the support of $X(Z=-1) | X=1$ is $\{1, -1\}$, we obtain $\Gamma_\cf(1) = \Gamma_\cf(-1)$. 

Repeating the above procedure for every $x \in \{-3, -1, 1, 3\}$, we obtain the following invariance for $\Gamma_\cf$: $\Gamma_\cf(1) = \Gamma_\cf(-1) = \Gamma_\cf(-3) = \Gamma_\cf(3)$. 
In words, $\Gamma_\cf$ is forced to be constant in this example.

\paragraph{Invariance imposed by \cref{def:cda_invariance} on $\Gamma_\da$.}
For each $x \in \{-3, -1, 1, 3\}$, we need to impose the condition:
$$
\Gamma_\da(x) = \Gamma_\da(X(Z=1) | X=x, Z=z^\map(x)) = \Gamma_\da(X(Z=-1) | X=x, z^\map(x)) \:.
$$

In what follows, we show the invariance imposed with $x=1$. 
First, we can obtain $z^\map(1) = \argmax_{z=\{-1, 1\}} P(Z=z | X=1) = -1$.
Then consider $X(Z=1) | X=1, Z=z^\map(1)$ with distribution
\begin{align*}
P(X(Z=1) | X=1, Z=-1) &= \sum_{u \in \{-1, 0, 1\}} P(X | do(Z=1), U_X=u) P(U_X = u | X=1, Z=-1) \\
&= \delta_3 \:,
\end{align*}
where once again $\delta_c$ denotes the Dirac measure at $c$.
Since the support of $X(Z=1) | X=1, Z=-1$ is $\{3\}$, we obtain $\Gamma_\da(1) = \Gamma_\da(3)$. 
The support of $X(Z=-1) | X=1, Z=-1$ is simply $\{1\}$ and only imposes the trivial constraint $\Gamma_\da(1) = \Gamma_\da(1)$.

Repeating the above procedure for every $x \in \{-3, -1, 1, 3\}$, we obtain the following invariance for $\Gamma_\da$: $\Gamma_\da(1) = \Gamma_\da(3)$ and $\Gamma_\da(-1) = \Gamma_\cf(-3)$. 
Note that unlike $\Gamma_\cf$, $\Gamma_\da$ is not enforced to be a constant in this example; it can be such that $\Gamma_\da(1) \neq \Gamma_\da(-1)$.
Thus, there is a representation that satisfies the invariance imposed in \cref{def:cda_invariance} but is \textbf{not} counterfactually-invariant as defined in \cref{def:counterfactual_invariance}.

\end{proof}

\end{document}